\def\eqref#1{equation~\ref{#1}}
\def\1{\bm{1}}
\DeclareMathAlphabet{\mathsfit}{\encodingdefault}{\sfdefault}{m}{sl}
\SetMathAlphabet{\mathsfit}{bold}{\encodingdefault}{\sfdefault}{bx}{n}
\def\gG{{\mathcal{G}}}
\def\gH{{\mathcal{H}}}
\def\gV{{\mathcal{V}}}
\def\sG{{\mathbb{G}}}
\def\sH{{\mathbb{H}}}
\newtheorem{theorem}{Theorem}[section]
\title{ION-C: Integration of Overlapping Networks via Constraints}
\author[1]{Praveen Nair}
\author[1]{Payal Bhandari}
\author[2]{Mohammadsajad Abavisani}
\author[3]{Sergey Plis}
\author[4]{David Danks}
\affil[1]{Department of Computer Science and Engineering, University of California, San Diego}
\affil[2]{Department of Electrical and Computer Engineering, Georgia Institute of Technology}
\affil[3]{TReNDS Center and Department of Computer Science, Georgia State University}
\affil[4]{Halicioglu Data Science Institute and Department of Philosophy, University of California, San Diego}
\begin{document}
\date{November 6, 2024}
\maketitle

\begin{abstract}
In many causal learning problems, variables of interest are often not all measured over the same observations, but are instead distributed across multiple datasets with overlapping variables. \citet{danks2008integrating} presented the first algorithm for enumerating the minimal equivalence class of ground-truth DAGs consistent with all input graphs by exploiting local independence relations, called ION. In this paper, this problem is formulated as a more computationally efficient answer set programming (ASP) problem, which we call ION-C, and solved with the ASP system \textit{clingo}. The ION-C algorithm was run on random synthetic graphs with varying sizes, densities, and degrees of overlap between subgraphs, with overlap having the largest impact on runtime, number of solution graphs, and agreement within the output set. To validate ION-C on real-world data, we ran the algorithm on overlapping graphs learned from data from two successive iterations of the European Social Survey (ESS), using a procedure for conducting joint independence tests to prevent inconsistencies in the input.
\end{abstract}

\section{Introduction}

Many inference problems require the use of data from different sources. Ideally, these data can be merged and collected into a single unified dataset (e.g., in tabular form) that is suitable for most learning methods. However, this type of data merging is not always possible. For example, suppose we have two distinct datasets, one from a financial institution and one from a healthcare provider. We might reasonably suspect that information about health outcomes and financial outcomes are related to one another; that is, we might want a unified model over these datasets. In practice, though, these datasets almost certainly cannot be integrated together for privacy reasons. Even worse, the datasets might be about different samples (even if from the same population), preventing us from directly linking observations from each dataset. At the same time, we might be able to leverage the variables that are measured in both datasets, such as someone's age, postal code, and so forth. We thus aim to learn about relationships between variables that are not co-measured in any dataset (existing or integrated), but where there are some variables that are measured in multiple datasets. 

Formally, we examine a method for enumerating the complete set of ground-truth graphs $\gH_i \in \sH$ consistent with a set of input graphs $\gG_i \in \sG$, each learned locally from a source dataset.\footnote{We assume that the ``overlap graph'' for $\gG_i$ is connected; that is, for any $\gG_j, \gG_k$, there is a sequence of graphs from $\gG_j$ to $\gG_k$ such that each pair of graphs in the sequence have non-empty intersection of their variable sets.} The first algorithm for solving this problem, Integration of Overlapping Networks (ION) \citep{danks2008integrating}, used a constructive solution that iterated through sets of changes to the complete graph that were faithful to independence relations in the input graphs. However, this formulation was computationally expensive, and only able to be tested on 4- and 6-node ground-truth graphs. In this work, we present a more efficient answer set programming formulation that, when solved, yields the same output set of graphs as ION; we call this ION-C, or ION via Constraints. 

In Section 2, we describe previous approaches to learning from data distributed across datasets. Section 3 presents and explains the answer set programming formulation of ION-C. In Section 4, we provide evaluation results for ION-C for a range of synthetic input graphs. In Section 5, we evaluate ION-C on real-world data from two iterations of the European Social Survey. In Section 6, we discuss limitations and potential extensions of the ION-C algorithm.

\section{Related Work}

Most structure learning methods (causal or otherwise) have focused on learning from a single dataset. As a result, there has been significant work on methods to unify datasets involving distinct variable sets (i.e., some variables are never co-measured) so that existing methods can be used. Most notably, since the 1960s, statistical matching approaches match individual observations from each dataset to observations from other datasets on the basis of distance in the \emph{co-measured} features \citep{budd1969obe, okner1972constructing}. These matches provide the basis for either imputations of unobserved variable values, or other statistical information connecting non-comeasured variables \citep{leulescu2013statistical}. 

Traditional statistical matching approaches are only provably reliable when non-overlapping variables from each input dataset are conditionally independent of one another given the overlapping variables. More precisely, in the two dataset case where $\textbf{D}_{1/2}$ is over $\textbf{V}_{1/2} \cup \textbf{V}_c$, these methods assume that $\textbf{V}_1 \perp \textbf{V}_2 | \textbf{V}_c$. This assumption is both rarely true in practice, and also untestable given only the input datasets \citep{sims1972comment, rodgers1984evaluation}. While methods to overcome this conditional independence assumption exist, they usually require the provision of additional data \citep{paass1986statistical, singh1993statistical}, or the existence of informational proxy variables \citep{zhang2015proxy}.

Federated learning (FL) methods also aim to combine distinct information sources. In this case, we typically aim to learn a single model (at a central server) from multiple data sources, ideally without exchanging any observations and without assuming i.i.d. data across the different sources~\citep{kairouz2021advances}. In typical ``horizontal'' FL problems, each data source contains a partition of observations over a shared feature space; in ``vertical'' FL, data sources contain different features about shared observations \citep{wei2022vertical}. Some vertical FL methods also require sample alignment between data sources via cryptographic communication protocols~\citep{lu2020multi}. Federated \emph{transfer} learning approaches aim to find single central models learned from information sources with both different sets of features and different observations, but typically with some small overlap in observations~\citep{liu2020secure, sharma2019secure}. 

While there are some high-level similarities, FL approaches inhabit a different problem space to the ION problem, since they seek efficient learning of a single best model rather than the full space of possible models given the data. They also are typically designed for distributed learning where a unified dataset could (in theory) be constructed. As a result, they usually face constraints of privacy and information flow that do not arise in our setting.

This paper is most directly related to \citet{danks2008integrating}, which presented an asymptotically correct algorithm that outputs the equivalence class of directed acylic graphs (DAGs) consistent with an input set of partial ancestral graphs (PAGs). Their Integration of Overlapping Networks (ION) algorithm starts with a complete graph, then encodes edge absence and orientation information from each input PAG, including propagation of all entailments~\citet{zhang2007characterization}. ION then finds all minimal sets of changes that would block paths between variables that are d-separated in at least one input PAG. These minimal changes are applied and propagated, and the resulting graph is accepted if it does not contradict the input PAGs. Finally, additional edge removals are tested to discover additional valid graphs. ION was shown to be both complete and sound, but is NP-complete and requires a superexponential number of operations. As a result, \citet{danks2008integrating} were only able to run ION on 4- and 6-node ground-truth graphs. 

ION takes PAGs as input; \citet{tillman2011learning} developed the Integration of Overlapping Datasets (IOD) algorithm that takes datasets as input. Their approach is closely related to the original ION algorithm, except that independence and association information is derived from p-value pooling over multiple datasets, rather than inferred from the input PAGs. IOD requires less memory than ION, and also outperformed ION in precision and recall, largely because IOD smoothly resolves (statistical) inconsistencies between input datasets.

Boolean satisfiability (SAT) solvers have also been applied to versions of this problem. \citet{triantafillou2010learning} used a SAT solver to find a single graph that encodes all possible pairwise causal relationships between variables. \citet{hyttinen2013discovering} used a SAT formulation of d-separation to discover cyclic causal models from a set of overlapping input graphs.  

Our approach uses answer set programming (ASP), a declarative problem-solving framework in which logical rules are provided to describe solution conditions for the problem \citep{marek1999stable, gelfond1988stable}. Relative to other problem-solving methods, ASP benefits from a simple problem formulation and high expressiveness \citep{eiter2009answer, brewka2011answer}, while leveraging optimization of the boolean SAT problem \citep{gebser2007clasp}. ASP has been used to encode other causal learning problems \cite{sonntag2015learning, rantanen2020learning,abavisani2023grace,solovyeva2023causal}. For example, \citet{hyttinen2014constraint} used ASP to represent causal discovery as an optimization problem, providing a set of dependence and independence relations with weights corresponding to their probabilities, and returning the optimal causal graph according to these weights.

\section{Problem Setting \& Method}

The problem that ION and ION-C aim to solve is to determine the complete set of ground-truth DAGs over all variables (that appear in at least one dataset) that are consistent with a set of overlapping input graphs. More formally: our inputs are a set of partial ancestral graphs (PAGs) $\gG_i \in \sG$, such that every graph $\gG_i$ shares at least one node with at least one other graph in the set (and these overlaps for a connected structure; see footnote 1). Importantly, although all output graphs are DAGs, the input graphs do not have to be DAGs. In this problem, there are known latent variables for every input graph (namely, variables that are only in a different graph). Some of those latents could be common causes, which produce bidirected edges in the input PAG.  

The output is a complete set of solution graphs $\sH$, where each graph $\gH_i \in \sH$ is a DAG containing the union of all nodes in every input graph $\gG_i$, such that each $\gH_i$ does not violate any of the local independence or association information encoded in the input graphs. Specifically, this means that all d-separation and d-connection relations in every input graph $\gG_i$ are preserved in every $\gH_i$.

As a concrete example, suppose that $\gG_1 = X \rightarrow Y \rightarrow Z$ and $\gG_2 = X \rightarrow W \rightarrow Z$. Exactly two graphs (over $\{ W,X,Y,Z \}$) preserve the d-separation and d-connection relations in these graphs: $\sH = \{ X \rightarrow Y \rightarrow W \rightarrow Z, X \rightarrow W \rightarrow Y \rightarrow Z \}$. Interestingly, in this example, we can learn that there must be a direct connection between $Y$ and $W$ (but not orientation of the edge), even though $Y$ and $W$ are never jointly measured.

In this paper, we present an answer set programming formulation of the integration of overlapping networks problem, which is implemented in the ASP system \textit{clingo} \citep{gebser2019multi}, based on the solver \textit{clasp} \citep{gebser2007clasp}. We define the ION problem by providing the graph as a set of facts, then define a set of rules that must hold in any valid solution. \textit{clingo} then outputs the set of all possible graphs that follows all of these facts and rules (see Listing 1).

The input PAGs are specified through sets of statements involving three different predicates:
\begin{enumerate}
    \item \texttt{edge(X,Y,T).}, denoting an edge from node $X$ to node $Y$ in input PAG $T$ 
    \item \texttt{bidirected(X,Y,T).}, denoting a bidirected edge between $X$ and $Y$ in $T$
    \item \texttt{nedge(X,Y,T).}, denoting absence of an edge in either direction between $X$ and $Y$ in $T$ 
\end{enumerate}
We additionally explicitly indicate all nodes in PAG $T$ with the command \texttt{varin(T, X).}. Finally, we provide the number of subgraphs and nodes as constants, and define all nodes with the command \texttt{node(0..n)}. 

\lstdefinestyle{clingostyle}
{
  basicstyle=\ttfamily\small,
  frame=single,
  numbers=left,
  xleftmargin=2em
}

\begin{lstlisting}[style=clingostyle, label=listing:clingo, float=t, caption=\textit{clingo} problem specification for ION-C problem.]
{edge(X,Y)} :- node(X), node(Y).

:- edge(X,Y), X = Y.
:- edge(X,Y), nedge(X,Y,T), varin(T,X), varin(T,Y).
:- edge(X,Y), path(Y,X).

path(Y,X) :- edge(Y,X).
path(Y,X) :- edge(Y,Z), path(Z,X).

directed(X,Y,T) :- edge(X,Y), varin(T,Y).
directed(X,Y,T) :- edge(X,Z), directed(Z,Y,T), not varin(T,Z).

causalconn(X,Y,T) :- directed(X,Y,T).
causalconn(X,Y,T) :- directed(Z,X,T), directed(Z,Y,T), not varin(T,Z).
bidirected(X,Y,T) :- causalconn(X,Y,T), not directed(X,Y,T).

:- nedge(X,Y,T), causalconn(X,Y,T), varin(T,X), varin(T,Y).
:- edge(X,Y,T), not directed(X,Y,T), varin(T,X), varin(T,Y).

#show edge/2.
\end{lstlisting}

Listing \ref{listing:clingo} describes the problem specification in a format suitable for \textit{clingo}. Line 1 defines any set of edge declarations between nodes as a valid solution. Lines 3 through 5 specify constraints for the solution: (3) self-loops are not allowed; (4) if an edge is absent in some input graph, then it cannot appear in a solution;\footnote{Edge absence in an input graph indicates a d-separation (conditional independence) relation that must be preserved in all output graphs, and so the output DAGs also cannot have an edge.} and (5) a valid solution must be acyclic. Lines 7 and 8 recursively define a directed path from $Y$ and $X$. Lines 10 and 11 provide a recursive definition of a directed edge from $X$ to $Y$ relative to the input graph $T$. Such an edge could be explained by a direct edge in the output graph, and also by a directed path that involves only nodes that do not appear in $T$ (since such a path would be an edge in $T$). Lines 13 and 14 define a causal connection between nodes $X$ and $Y$ in input graph $T$ as a directed edge between nodes, or an unobserved common cause of both nodes. Line 15 states that a bidirected edge in the input graph $T$ implies a causal connection between nodes without a directed edge in the solution graph, due to an unobserved common cause.

Line 17 specifies that the nonexistence of an edge (either directed or bidirected) between two nodes in the same input graph $T$ implies the lack of a causal connection. Line 18 specifies the converse: a directed edge between two nodes in the same input graph implies a directed path between them. Finally, line 19 specifies the output of edge pairs for all solution graphs.

In order to show that the ION-C ASP formulation leads to the correct output equivalence class, we show that the problem statement is complete and sound.

\begin{theorem}
Soundness: If nodes $X$ and $Y$ are d-separated (d-connected) given nodes \textbf{Z} in some $\gG_i \in \sG$, then $X$ and $Y$ are d-separated (d-connected) given \textbf{Z} in every output $\gH_i \in \sH$.
\end{theorem}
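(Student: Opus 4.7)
The plan is to view each output DAG $\gH_i$ as a DAG over the full node set in which the nodes not in $\gG_i$ play the role of latent variables, and then argue that the ASP constraints force $\gG_i$ to coincide (up to mark uncertainty) with the latent projection of $\gH_i$ onto $\gG_i$'s variable set. Once that structural equivalence is established, the theorem reduces to the standard latent projection result: d-separation in a DAG between observed nodes given an observed conditioning set is preserved in (and inherited from) its latent projection MAG.

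First, I would unpack the correspondence predicate by predicate. The recursion on lines 10--11 defines \texttt{directed(X,Y,T)} to hold exactly when there is a directed path in $\gH_i$ from $X$ to $Y$ whose intermediate vertices all lie outside $\gG_i$; the recursion on lines 13--14 extends this with common-cause configurations through such latent nodes, yielding \texttt{causalconn(X,Y,T)}. Line 17 then enforces that whenever $\gG_i$ declares $X$ and $Y$ non-adjacent via \texttt{nedge}, no latent-mediated causal connection exists in $\gH_i$; line 18 is the converse for visible directed edges; and line 15 aligns bidirected edges of $\gG_i$ with genuine common-cause structure in $\gH_i$. Combined with acyclicity (line 5), this establishes that $\gH_i$ restricted-and-projected onto $\textbf{V}(\gG_i)$ is a MAG in the Markov equivalence class represented by the PAG $\gG_i$.

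Next, I would invoke the standard latent projection theorem: for any DAG $\gH$ and any subset $\textbf{O}$ of its nodes, two nodes $X, Y \in \textbf{O}$ are d-separated given $\textbf{Z} \subseteq \textbf{O}$ in $\gH$ if and only if they are m-separated given $\textbf{Z}$ in the latent projection MAG over $\textbf{O}$. Combining this with the previous paragraph, a d-separation (resp.\ d-connection) in $\gG_i$ given $\textbf{Z}$ implies the corresponding m-separation (resp.\ m-connection) in the latent projection of $\gH_i$, which then lifts to d-separation (resp.\ d-connection) in $\gH_i$. Since $\textbf{Z}$ is contained in $\textbf{V}(\gG_i)$ and the additional mediating nodes of $\gH_i$ are latent (hence not in $\textbf{Z}$), the active/blocked status of every candidate path is preserved.

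The main obstacle will be the first step: verifying carefully, edge-type by edge-type, that the ASP's derived predicates really do reconstruct the latent projection rather than something strictly weaker. In particular, PAGs carry circle marks representing uncertainty about ancestral status, so I need to check that for each arrowhead or tail forced in $\gG_i$, lines 17 and 18 together with the acyclicity constraint rule out $\gH_i$ configurations that would contradict that mark, while invariant circle marks impose no additional obligations. The bidirected case on line 15 is the subtlest, since it is defined derivatively from \texttt{causalconn} minus \texttt{directed}; I would handle it by showing that any latent-mediated connection in $\gH_i$ that is not a directed path must exhibit a common ancestor outside $\textbf{V}(\gG_i)$, matching the semantics of a bidirected edge in the projection. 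Once these case analyses are in place, the d-separation preservation is a routine consequence of the latent projection theorem, and d-connection follows by the same argument applied to the existence of an active path rather than its absence.
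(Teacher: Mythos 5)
Your route is genuinely different from the paper's: you reduce soundness to the latent-projection theorem by arguing that the constraints force the projection of each $\gH_i$ onto $\gG_i$'s variable set to be a MAG Markov equivalent to $\gG_i$, whereas the paper argues directly about a hypothetical active path in $\gH_i$, splitting into the case where that path is a directed path or common-cause trek (excluded by line 17) and the case where it passes through a collider with a descendant in $\textbf{Z}$ (handled by recursing on the subpaths via lines 10--11). The reduction would be elegant if it worked, since the structural equivalence delivers both the d-separation and d-connection halves at once; but the pivotal first step, as you plan to establish it, has a genuine gap.

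The gap is that adjacency in the latent projection MAG is characterized by the existence of an \emph{inducing path} relative to the latent vertices, and inducing paths may pass through colliders --- including observed ones --- that are ancestors of an endpoint. The predicate \texttt{causalconn(X,Y,T)} captures only two configurations: a directed path whose intermediates all lie outside $T$, and a common cause outside $T$ with such all-latent directed paths to both endpoints. Consider a candidate $\gH_i$ containing $X \leftarrow L_1 \to C \leftarrow L_2 \to Y$ with $C$ observed in $\gG_i$ and $C$ an ancestor of $X$ through latent vertices only: $X$ and $Y$ are then adjacent in the projection (hence inseparable by any observed conditioning set), yet \texttt{causalconn(X,Y,T)} is not derivable, because the only directed path from the common cause $L_2$ to $X$ passes through the observed vertex $C$, so \texttt{directed(L2,X,T)} fails. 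Thus \texttt{nedge} plus line 17 does not give you non-adjacency in the projection, and your proposed resolution of the subtlest case --- ``any latent-mediated connection that is not a directed path must exhibit a common ancestor outside $\textbf{V}(\gG_i)$'' reachable through latents --- is exactly the claim that fails. A secondary issue: even with matching adjacencies and ancestral marks, Markov equivalence of MAGs additionally requires agreement on unshielded colliders and on colliders along discriminating paths, which the purely pairwise predicate correspondences you enumerate never address. To repair the argument you would either need to show that the offending configurations are excluded by the constraints imposed on \emph{other} pairs (which amounts to the path-decomposition work the paper's direct argument performs), or retreat from ``the projection is Markov equivalent to $\gG_i$'' to the weaker one-directional implications that the constraints actually license, at which point you are essentially back to the paper's case analysis.
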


\begin{proof}
Suppose $X$ and $Y$ are d-separated given \textbf{Z} in some $\gG_i$, but d-connected in some output $\gH_i$. This implies that there is a path between $X$ and $Y$ in $\gH_i$ that is active given \textbf{Z}. $X$ and $Y$ are not adjacent in $\gG_i$, and so (by line 17) the output graph d-connection cannot be a directed path or common cause. The only remaining possibility is that some variable in $R \in \textbf{Z}$ is a descendant of a collider in $\gH_i$ on a path between $X$ and $Y$. This implies, however, that $\gH_i$ includes paths from $X$ to $R$ and $Y$ to $R$ that are active given $\textbf{Z} \ R$. However, this implies (per lines 10-11) that each of these paths corresponds to a sequence of edges in $\gG_i$ that contradict the known d-separation in $\gG_i$.

Now suppose that $X$ and $Y$ are d-connected given \textbf{Z}. Line 18 specifies that if an edge exists between two nodes $X$ and $Y$ in input graph $T$, then the property \texttt{directed(X,Y,T)} is true. Per lines 10 and 11, \texttt{directed(X,Y,T)} holds true only when there is an edge from $X$ to $Y$ in the output, or when the solution includes multiple edges from $X$ to $Y$ consisting of intermediate nodes that were not observed in graph $T$. This means that any pair of nodes connected by an edge in an input $T$ will be connected either by a single edge, or by a directed path of nodes that were not included in $T$. This, in turn, entails the necessary d-connection relation.
\end{proof}

\begin{theorem}
Completeness: Let $\gH_i$ be a partial ancestral graph over variables $\gV$ such that for every $\{(X,Y)\} \subseteq \gV$, if X and Y are d-separated (d-connected) given \textbf{Z} $\subseteq \gV / \{X,Y\}$ in some $\gG_i \in \sG$, then X and Y are d-separated (d-connected) given \textbf{Z} in $\gH_i$. Then, $\gH_i$ is in $\sH$.
\end{theorem}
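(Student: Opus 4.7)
The plan is to show that any DAG $\gH_i$ satisfying the hypothesis is an answer set of the \textit{clingo} program in Listing \ref{listing:clingo}, so that it is enumerated in $\sH$. Concretely, I would take the edge set of $\gH_i$ as the witness for the choice rule on line 1, then verify that every integrity constraint (lines 3, 4, 5, 17, 18) is respected while each derived predicate (lines 7--8, 10--11, 13--15) is simultaneously consistent.

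Lines 3 (no self-loops) and 5 (acyclicity) are immediate from $\gH_i$ being a DAG. For line 4, suppose $X,Y \in T$ with \texttt{nedge(X,Y,T)}: non-adjacency in the input PAG $\gG_i$ encodes that some $\textbf{Z}$ over $\gG_i$'s variables d-separates $X$ from $Y$ in $\gG_i$. By hypothesis this d-separation transfers to $\gH_i$, and since a direct edge cannot be blocked by any conditioning set, $\gH_i$ has no such edge.

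For line 17 I would extend this argument: any directed path from $X$ to $Y$ (or common-cause structure) in $\gH_i$ whose intermediate nodes all lie outside $T$ would remain d-connecting under every $\textbf{Z} \subseteq \gV(T) \setminus \{X,Y\}$, contradicting the inherited d-separation, so the derived \texttt{causalconn(X,Y,T)} cannot hold. Line 18 is the delicate step: given \texttt{edge(X,Y,T)} in the input, I need \texttt{directed(X,Y,T)} to be derivable, equivalently a directed path in $\gH_i$ from $X$ to $Y$ whose intermediate vertices lie outside $T$. I would obtain this by noting that the hypothesis (together with its d-connection half) implies the d-separation relations over $\gV(T)$ agree between $\gH_i$'s marginal and $\gG_i$, so the marginal PAG of $\gH_i$ coincides with $\gG_i$; an edge $X \to Y$ in that marginal PAG is by construction an ancestral relation in $\gH_i$ routed through non-$T$ intermediates, which is precisely what lines 10--11 encode.

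The main obstacle is this last structural identification, bridging from ``d-separations over $\gV(T)$ are preserved'' to the specific ancestral-path claim of line 18. The cleanest route invokes standard PAG/MAG theory, by which the marginal PAG is determined by marginal d-separation patterns, forcing agreement between the marginal PAG of $\gH_i$ and $\gG_i$. Analogous reasoning handles \texttt{bidirected} input atoms via lines 13--15, by reading them as witnessed common causes among non-$T$ nodes. A minor subtlety worth flagging is that the theorem statement calls $\gH_i$ a ``partial ancestral graph'' even though $\sH$ consists of DAGs; I would interpret the hypothesis as applying to a DAG $\gH_i$, consistent with the membership claim $\gH_i \in \sH$.
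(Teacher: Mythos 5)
Your proposal takes essentially the same route as the paper's proof: both argue constraint by constraint that none of the program's integrity constraints (lines 3--5, 17, 18, together with the derived predicates of lines 10--15) can exclude a graph satisfying the hypothesis, and then invoke the fact that \textit{clingo} enumerates every model of the specification to conclude $\gH_i \in \sH$. If anything you are more explicit than the paper about the one delicate step --- justifying why an input edge \texttt{edge(X,Y,T)} must be realized in $\gH_i$ as a directed path through non-$T$ intermediates, which you bridge via agreement of $\gH_i$'s marginal PAG with $\gG_i$, whereas the paper simply asserts that the line-11/14--15 rewritings preserve the relevant d-connections.
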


\begin{proof}
In order to show completeness, we must show that no d-separations or d-connections present in the input graph are unnecessarily removed from the output set $\sH$. All edge removals in line 4 are necessary to translate d-separations from the inputs, as is the acyclicity constraint in line 5. Remaining edge removals only occur in line 14 and 15 by removing bidirected edges $X \leftrightarrow Y$ and retaining the relevant d-connections by creating directed paths to $X$ and $Y$ from the unobserved common cause, or in line 11 to replace a directed edge $X \rightarrow Y$ with a previously unobserved path of edges $X \rightarrow Z \rightarrow Y$. Because \textit{clingo} outputs the entire set of solution graphs matching the given constraints, and because none of the changes specified by these constraints would preclude such an output $\gH_i$ from the solution set, ION-C is complete for the problem.  
\end{proof}

\section{Simulation Results} \label{results}

In \citet{danks2008integrating}, the ION algorithm was only evaluated on 4- and 6-node directed acyclic graphs (DAGs) due to computational constraints. In order to establish the usability of the ION-C algorithm on larger graphs with the faster ASP formulation (and additional computational resources), we tested ION-C on graphs of varying sizes, densities, and overlap between subgraphs.

We randomly generated ``ground truth'' graphs using four control parameters: (i) the total number of nodes $\mathcal{N}$; (ii) $p_{degree}$ that controls ground-truth density; (iii) the number of input subgraphs $s$; and (iv) $p_{overlap}$ that controls the extent of input graph overlap. More precisely, each ground-truth graph was generated with $\mathcal{N}$ nodes, and random edges such that each node makes connections to $a$ other nodes, with $a \sim \mathrm{Bin}(\mathcal{N} - 1,p_{degree})$. As $p_{degree}$ increases, more connections are made, and ground-truth graphs are denser. Finally, we check that the DAG is connected, and add required edges to connect the graph if not. To generate input subgraphs, we first split the nodes evenly into $s$ partitions, and for each partition set, we sample $p_{overlap}$ of the nodes from other partitions. As $p_{overlap}$ increases, each subgraph will contain more nodes, and the level of overlap between subgraphs will increase.

Given the ground-truth graph and a subset of nodes, we analytically generate the input subgraph by marginalizing out the variables not in the subset. The resulting input PAG is provably causally faithful to the ground-truth. For example, if the ground-truth contains $X \rightarrow Z \rightarrow Y$ but the subgraph does not include $Z$, then the input PAG will have $X \rightarrow Y$. In addition, we connect nodes $X \leftrightarrow Y$ if they share a common cause that is not observed in that subgraph. Given a set of input PAGs for a single ground-truth graph, we convert the inputs (as described in Section 3) and run the ASP solver to find the full set of possible ground-truth graphs consistent with the input graphs. 

We ran 100 simulated ground-truth graphs for each possible combination of parameters, with $\mathcal{N} \in \{6, 8, 10, 15, 25\}$, $p_{overlap} \in \{0.25, 0.5, 0.75\}$, $p_{degree} \in \{0.1, 0.25, 0.5, 0.75\}$, and $S \in \{2, 3, 4\}$. For graphs with 15 and 25 nodes, due to the high complexity of denser graphs, we additionally used $p_{degree}$ values of 0.025, 0.05, and 0.075. In total, we considered 234 sets of 100-graph simulations. All instances were run with four-hour timeouts for the \textit{clingo} solver on nodes with 24 GB of RAM. We only report results for parameterizations that resulted in at least 95 of 100 ground-truths completing (and all reported proportions are relative to the completed runs). 153 parameterizations resulted in completion of at least 95 of 100 output solution sets.

For each simulation, we initially report two key statistics. First, $prop\_same$ is the proportion of all possible edges or edge absences that are shared across 75\%, 90\%, and 100\% of the solution set. This statistic provides a measure of the similarity of graphs in the solution set. Second, $prop\_accurate$ indicates, as a proportion of the edges/absences shared in 75\%, 90\%, or 100\% of the solution set, what proportion are found in the ground-truth graph itself (ignoring orientation). This statistic provides a measure of the ``accuracy'' of the output set: are the most common edges/absences correct? Complete results for all parameterizations are available in Appendix \ref{appendix:fullresults}. Figures \ref{fig:same_90_8}, \ref{fig:accurate_90_8}, and \ref{fig:n_graphs_8} show these statistics for all 8-node graphs, for which all graphs ran at all parameterizations. Tables \ref{tab:15} and \ref{tab:25} display $prop\_same$ and $prop\_accurate$ for completed parameterizations among 15- and 25-node graphs with two subgraphs.

\begin{figure}[H] 
\begin{center}
\includegraphics[width=\textwidth]{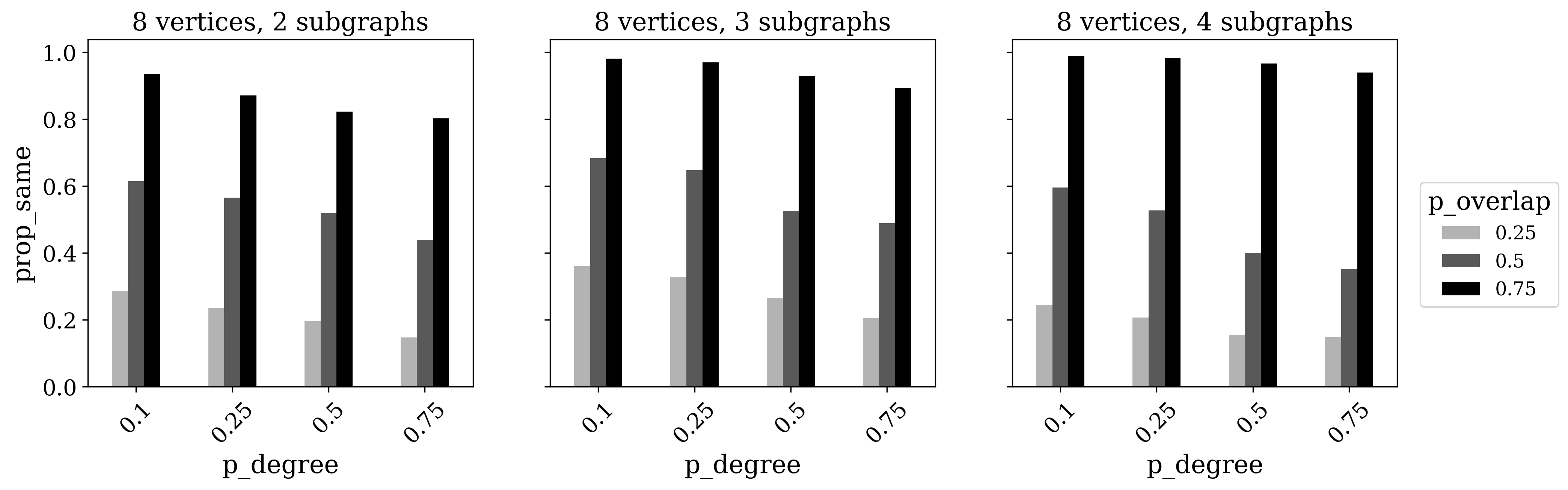}
\end{center}
\vspace*{-7mm}
\caption{Mean proportion of edge adjacencies and absences shared in 90\% of the solution set.}
\label{fig:same_90_8}
\end{figure}

\begin{figure}[H] 
\begin{center}
\includegraphics[width=\textwidth]{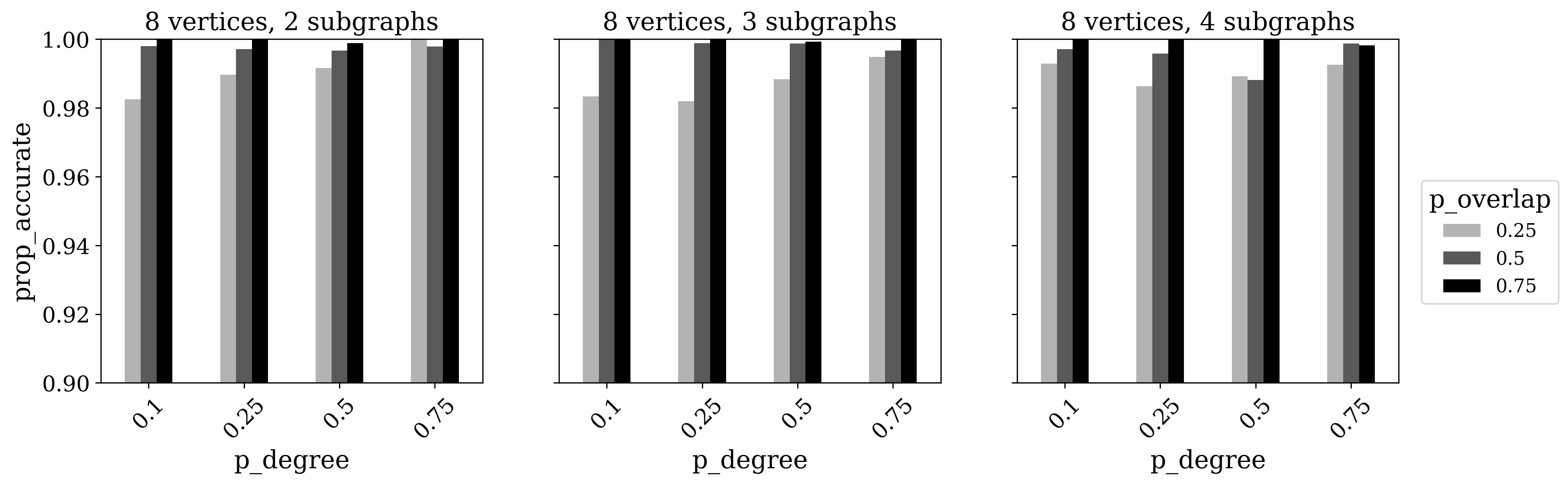}
\end{center}
\vspace*{-7mm}
\caption{Mean proportion (of edge adjacencies and absences shared in 90\% of the solution set) that match ground truth.}
\label{fig:accurate_90_8}
\end{figure}

As expected, the most important factor controlling the number of output graphs, and consequently the runtime of the algorithm, was the amount of overlap between the input subgraphs. For example, in 8-node ground-truth graphs with $p_{degree} = 0.75$ with two subgraphs (the rightmost set of bars in the left graph in Figure \ref{fig:n_graphs_8}), the three settings of overlap corresponded to two subgraphs with 5, 6, and 7 nodes each. The median number of solution graphs was 25648, 161, and 5, respectively. (In many settings with $p_{overlap} = 0.75$, there was only one valid solution graph.) The degree of overlap in the graphs is also the largest factor in the coherence of the output set; as Figure \ref{fig:same_90_8} indicates, proportion of edge adjacencies or absences that is shared across 90\% of the solution set is closely related to the overlap in nodes. 

Lower overlap settings typically led to lower accuracy in terms of the widely-shared edges in the output set, though this was not the case in every parameterization run (see Figure \ref{fig:accurate_90_8}). The number of input subgraphs that the ground truth was split into, $s$, had little impact compared to the degree of overlap, with similar results for 2, 3, and 4 subgraphs in these results. These patterns are replicated across all numbers of nodes tested, although with larger graphs, the simulations with $p_{degree} \geq 0.25$ are rarely reported because too many simulations timed out. 

Input graphs with increased ground-truth density had on average larger solution sets across all numbers of vertices and subgraphs. We also observe slight decreases in the proportion of edges and edge absences shared in 90\% of solutions as density increased, although in testing with larger graphs on lower densities, this decrease did not occur until density reached at least $p_{degree} = 0.1$.

\begin{figure}[H]
\begin{center}
\includegraphics[width=\textwidth]{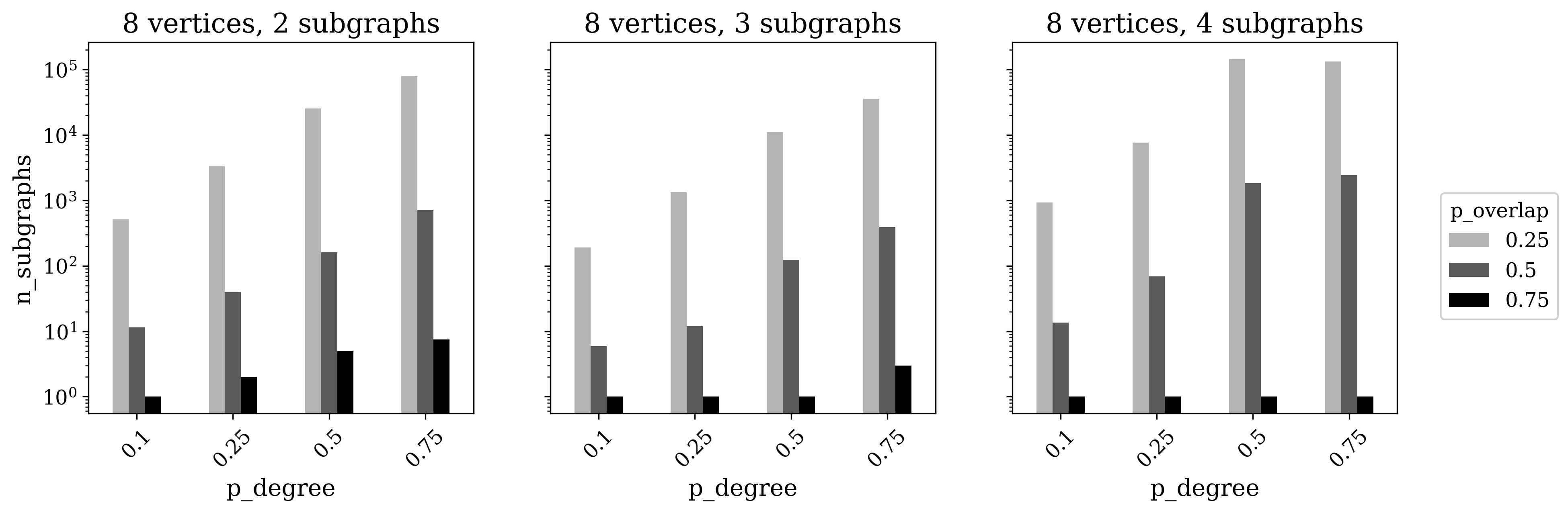}
\end{center}
\vspace*{-7mm}
\caption{Median number of graphs in the solution set.}
\label{fig:n_graphs_8}
\end{figure}

Figure \ref{fig:n_graphs_8} reports the \textit{median} number of graphs in the solution set; note that we have median of 1 output graph for many settings of $p_{overlap}$. Nonetheless, almost all parameterizations produced a very long tail in terms of runtime. Among all successful parameterizations we examined, the median ratio of the maximum runtime of successful graphs to the median runtime across all graphs was 10.58; the median ratio of the maximum runtime to the 90th-percentile runtime was 3.53. For example, in simulations with 15 nodes split into two subgraphs, $p_{degree} = 0.05$, and  $p_{degree} = 0.25$: half of the graphs yielded solutions within 1.41 seconds; 90\% finished within 161 seconds; but one graph (generated from the same parameters) took over 3.6 hours to solve.

In these simulations, we use the proportion of accurate edges and edge absences among those shared in a certain proportion of the solution set as a measure of confidence in each edge commission or omission (in Figures \ref{fig:same_90_8} and \ref{fig:accurate_90_8} that proportion is 90\%.) On average, across every complete solution we examined, the average proportion of accurate edges or edge absences among those in at least 75\% of solution graphs was 97.33\%. When the threshold is increased to 90\%, the average proportion increases to 99.55\%. Edges that appear in 100\% of solution set graphs  were always accurate, as the input graphs are derived analytically (and ION-C is provably sound). However, as solution sets get larger, the proportion of shared edges or edge absences consistently decreases.

\begin{table}[H]
\caption{For 15-node ground-truth graphs split in 2 subgraphs: Proportion of edges \& absences found in $\geq 90\%$ of outputs (left); proportion of these edges \& absences found in ground truth (right)\\}
\label{tab:15}
\centering
\begin{tabular}{rcccc}
 &  & \multicolumn{1}{c}{\textbf{}} & \multicolumn{1}{c}{$p_{overlap}$} & \multicolumn{1}{c}{\textbf{}} \\
 & \multicolumn{1}{c|}{\textbf{}} & \multicolumn{1}{c}{\textbf{0.25}} & \multicolumn{1}{c}{\textbf{0.5}} & \multicolumn{1}{c}{\textbf{0.75}} \\ \cline{2-5} 
 & \multicolumn{1}{c|}{\textbf{0.025}} & 0.382 & 0.696 & 0.945 \\
 & \multicolumn{1}{c|}{\textbf{0.050}} & 0.385 & 0.706 & 0.947 \\
$p_{degree}$ & \multicolumn{1}{c|}{\textbf{0.075}} & * & 0.704 & 0.955 \\
 & \multicolumn{1}{c|}{\textbf{0.100}} & * & 0.726 & 0.953 \\
 & \multicolumn{1}{c|}{\textbf{0.250}} & * & * & 0.921 \\
 & \multicolumn{1}{c|}{\textbf{0.500}} & * & * & 0.829 \\
 & \multicolumn{1}{c|}{\textbf{0.750}} & * & * & 0.805
\end{tabular}
\quad
\begin{tabular}{rcccc}
 &  & \textbf{} & $p_{overlap}$ & \textbf{} \\
 & \multicolumn{1}{c|}{\textbf{}} & \textbf{0.25} & \textbf{0.5} & \textbf{0.75} \\ \cline{2-5} 
 & \multicolumn{1}{c|}{\textbf{0.025}} & 0.969 & 0.994 & 1.000 \\
 & \multicolumn{1}{c|}{\textbf{0.050}} & 0.968 & 0.991 & 1.000 \\
$p_{degree}$ & \multicolumn{1}{c|}{\textbf{0.075}} & * & 0.999 & 1.000 \\
 & \multicolumn{1}{c|}{\textbf{0.100}} & * & 0.996 & 1.000 \\
 & \multicolumn{1}{c|}{\textbf{0.250}} & * & * & 1.000 \\
 & \multicolumn{1}{c|}{\textbf{0.500}} & * & * & 0.999 \\
 & \multicolumn{1}{c|}{\textbf{0.750}} & * & * & 0.997
\end{tabular}
\end{table}

\vspace{-2\baselineskip}

\begin{table}[H]
\caption{For 25-node ground-truth graphs split in 2 subgraphs: Proportion of edges \& absences found in $\geq 90\%$ of outputs (left); proportion of these edges \& absences found in ground truth (right)\\}
\label{tab:25}
\centering
\begin{tabular}{cccc}
 &  & $p_{overlap}$ &  \\
 & \multicolumn{1}{c|}{} & \textbf{0.50} & \textbf{0.75} \\ \cline{2-4} 
 & \multicolumn{1}{c|}{\textbf{0.025}} & 0.705 & 0.921 \\
$p_{degree}$ & \multicolumn{1}{c|}{\textbf{0.050}} & 0.685 & 0.915 \\
 & \multicolumn{1}{c|}{\textbf{0.075}} & * & 0.928 \\
 & \multicolumn{1}{c|}{\textbf{0.100}} & * & 0.917
\end{tabular}
\quad
\begin{tabular}{cccc}
 &  & $p_{overlap}$ &  \\
 & \multicolumn{1}{c|}{} & \textbf{0.50} & \textbf{0.75} \\ \cline{2-4} 
 & \multicolumn{1}{c|}{\textbf{0.025}} & 0.995 & 0.991 \\
$p_{degree}$ & \multicolumn{1}{c|}{\textbf{0.050}} & 0.996 & 1.000 \\
 & \multicolumn{1}{c|}{\textbf{0.075}} & * & 0.999 \\
 & \multicolumn{1}{c|}{\textbf{0.100}} & * & 1.000
\end{tabular}
\end{table}

\section{Application to Real-world Data} \label{ESS}

In order to examine the real-world performance and utility of ION-C, we use data from rounds 8 and 9 of the European Social Survey (ESS), from years 2016 and 2018, respectively \citep{ess8, ess9}. The ESS survey, conducted every two years, asks participants a core set of questions in every survey, in addition to a rotating set of topical modules that vary in each iteration. Rotating modules not asked in the same survey round are thus not co-measured, but ION-C can potentially be used to enumerate possible ground-truth graphs based on graphs learned within each survey round.

We selected 8 variables from the "welfare attitudes" module from ESS round 8; 8 from the "justice and fairness" module from ESS round 9; and an overlap group of 8 variables that were measured in both survey rounds. We suspected that there might be connections between participants' attitudes about the round-specific topics; for example, someone who is particularly concerned about fairness might plausibly want a strong, supportive welfare system. 

We learn causal graphs for each survey round using the PC algorithm \citep{spirtes2001causation}, allowing for missing data using the method in \citet{tu2019causal} implemented in the \textit{causal-learn} Python package \citep{zheng2024causal}. (Missing values correspond to nonresponses, refusals, and other non-answer codes from the ESS dataset.) In order to maintain consistency in causal structures among the overlapping nodes, we use the p-value pooling method for testing independence across multiple datasets outlined in Algorithm 1 of \citet{tillman2011learning}, and adjust the graphs in the same fashion as the synthetic graphs -- this time, with no knowledge of the actual ground truth, but using the merged graph provided by the shared independence tests -- and pass the two resulting graphs into the ION program.

\begin{figure}[h] 
\begin{center}
\includegraphics[width=0.7\textwidth]{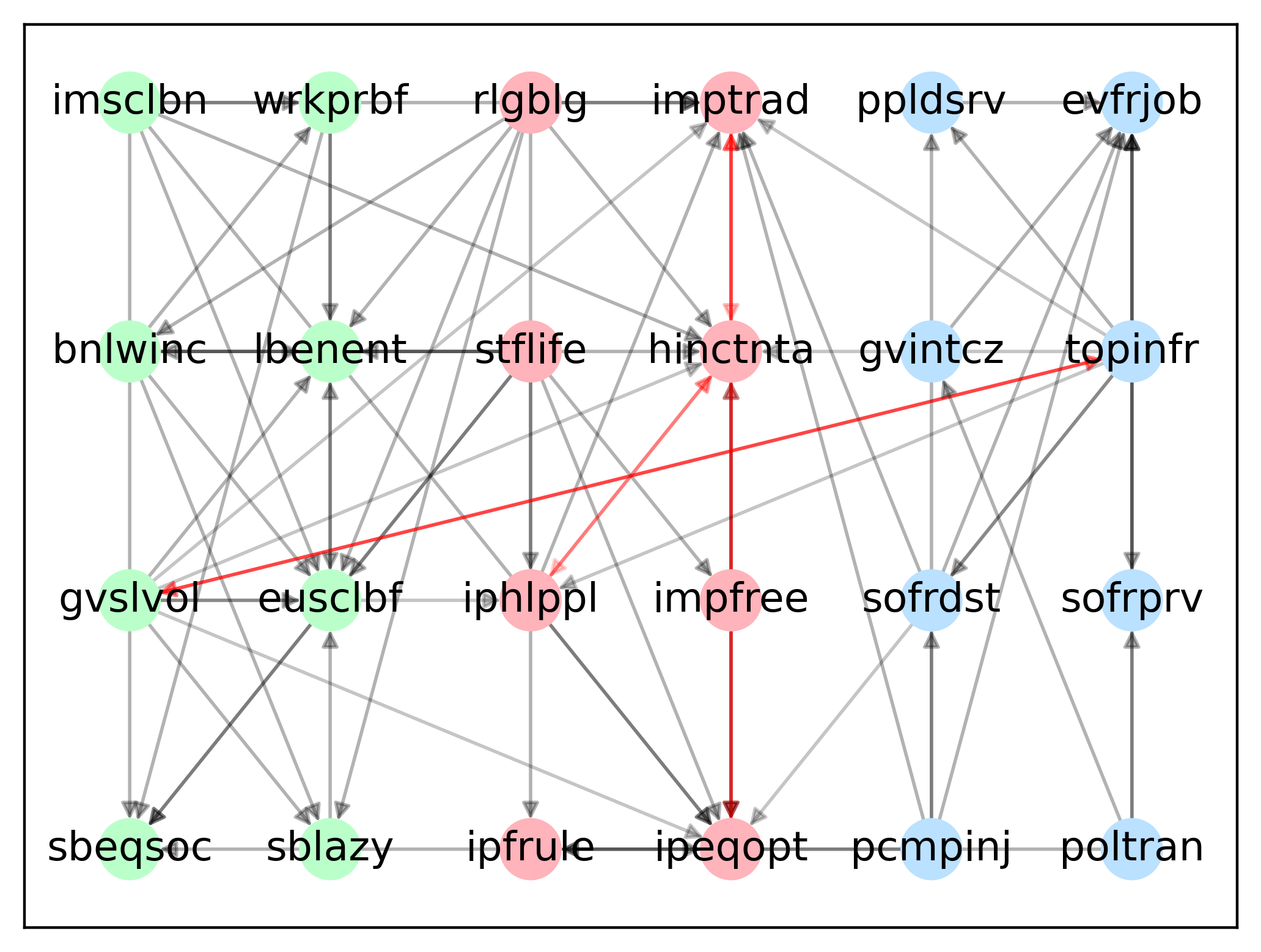}
\caption{Representation of ION solution set.}
\label{fig:solution_plot}
\end{center}
\end{figure}

The resulting ION-C solution set contained 2,046 graphs. Figure \ref{fig:solution_plot} displays the ION-C solution set, with edge opacity corresponding to the proportion of solution graphs that contain that edge. (Green (blue) nodes are variables only in ESS 8 (ESS 9); red nodes are those measured in both surveys. Full variable names are provided in Appendix \ref{appendix:ESSvars}.) Edges that were not present in either input graph appear in red. Note that edges that appear bidirected in Figure \ref{fig:solution_plot} are not actually bidirected, but rather represent connections where different solution graphs orient the edge in different directions. Note also that not all edges that appear in an input graph appear in the entire output.

In total, 58 of 66 edges contained in the original graphs were present in all solution graphs, while the remainder all appeared in exactly 1,550 graphs. Meanwhile, edges not present in any input graph were present on average in 34.4\% of solution graphs, although this does not merge edges in opposite directions between the same nodes.

We observe two kinds of added edges: those between nodes in the intersection of the inputs, and one pair of nodes that were not co-measured. This latter pair of nodes was \textit{gvslvol}, a question in which participants were asked whether the standard of living of the elderly was the government's responsibility, and the other was \textit{topinfr}, a question asking participants how fair the salaries of the top 10\% of income earners was. This edge was observed in 1,984 of 2,046 solution graphs, with 992 graphs each containing this edge in each direction, making it the most common solution set edge not contained in either input graph. Moreover, this edge is arguably intuitively plausible, as both factors are related to people's high-level views about the role of government in economic support.

\section{Discussion}

While the output of the ION-C algorithm is provably correct---that is, it returns all possible ground-truth graphs consistent with the input---there are limitations to this approach as a methods of causal discovery given overlapping graphs. Just as with the constructive formulation of ION in \citet{danks2008integrating}, contradictory information in input datasets, whether due to differences in underlying distributions in the data or statistical errors in the causal discovery process, can make the constraint formulation unsatisfiable, with no possible ground truths satisfying this conflict. The number of conditional independence tests required in the PC algorithm is potentially super-exponential in the number of variables, and therefore the likelihood of mistaken edge commissions, deletions, or orientations drastically increases as dimensionality increases. 

Not only can statistical errors lead to unsatisfiable ION-C problems, but if statistical errors occur in multiple input graphs, it is possible for ION-C to return a solution set that, while valid for the input graphs as stated, is inaccurate to the ground truth. Potential methods for improving such errors include the p-value pooling approach outlined in \cite{tillman2011learning}, which ensures consistency in the causal structures over the overlapping nodes. Another option is to find the closest satisfiable set of graphs to the input set, using a metric like the structural Hamming distance \citep{tsamardinos2006max} to compare to the original input. This latter approach will find valid ground-truths that require the fewest changes to the provided input graphs, even if the learned causal graphs are inconsistent with each other.

An additional limitation is in the interpretation of the output equivalence class of graphs. As seen in the results, these sets can range into the tens or hundreds of millions of graphs, even given relatively small input graphs. Of course, these large output sets are still much smaller than the super-exponential number of $n$-node DAGs, but large output sets might have limited real-world utility. 

In this paper, we use the proportion of the solutions in which a given edge or edge absence appears as a sort of ad-hoc confidence metric; for example, a node that appears in 90\% of the solution set is very likely to be present in the ground truth. This is not entirely baseless -- ION provides all possible graphs consistent with the input, and barring input errors, the actual ground-truth is one of these graphs. Therefore, if we start with a flat prior over possible global graphs, then this measure accurately describes the likelihood of output graphs in our beliefs. 

Indeed, in our results, we found that edges or edge absences that were in large proportions of the output set were very likely to be accurate. However, in order to more clearly determine the single ground truth, additional information or experiments would be needed to disambiguate ION-C solution set graphs. In this way, ION-C could serve to indicate edges of interest that are likely, but not certain to exist, or indicate edges that the solution set has high disagreement over, allowing an intervention on these edges to most efficiently cut down the set of possible ground truths as part of an experimental process. In Section \ref{ESS}, for example, we saw that the ION-C output, with a solution size in the thousands, involves disagreement over only a small number of edges, highlighting which variables and relationships we do not currently have the information to understand. 

Even without leveraging other information, there are potentially other methods or assumptions that could help to deal with the size the ION-C solution set. To provide one example, suppose two potential ground-truth graphs $\gH_1$ and $\gH_2$ are returned by ION-C, where the edges in $\gH_1$ are a proper subset of those in $\gH_2$. We might make a simplicity assumption that leads us to focus on $\gH_1$, the graph with fewer causal connections. In this fashion, by leveraging additional assumptions or requirements from the data, we can take the often very large solution set returned by ION-C and reduce it into more useful constructs for analysis.

\subsubsection*{Reproducibility Statement}

In order to reproduce the results described above, we provide the \textit{clingo} code for the ION-C problem in Listing \ref{listing:clingo}, and as part of supplementary material. In addition, all code used to conduct the simulations from Section \ref{results}, as well as code to output the ION problem given data from the ESS, is provided as part of supplementary material. Full results from the simulations we ran are available in Appendix \ref{appendix:fullresults}.

\subsubsection*{Acknowledgments}
This work was supported by NIH R01MH129047, and in part by NSF 2112455.

\bibliography{ion_c}
\bibliographystyle{plainnat}

\newpage
\appendix
\section{Full Results on Synthetic Data} \label{appendix:fullresults}

Key for column headers:
\begin{itemize}
    \item $\mathcal{N}$: number of vertices in ground truth
    \item $p_{degree}$: controls density of ground truth
    \item $p_{overlap}$: controls overlap of subgraphs
    \item $s$: number of subgraphs ground truth is split into
    \item Graphs Run: number of simulations in which ION-C returned the full solution set without timing out
    \item Runtime: median amount of time in which ION-C returned the full solution set
    \item Solution Graphs: median number of graphs in the ION-C solution set
    \item $S_{75}$: proportion of edges or edge absences shared in 75\% of solution graphs ($prop\_same$)
    \item $A_{75}$: Among edges or edge absences shared in 75\% of solution graphs, proportion that are accurate to ground truth ($prop\_accurate$)
    \item $S_{90}$: proportion of edges or edge absences shared in 90\% of solution graphs
    \item $A_{90}$: Among edges or edge absences shared in 75\% of solution graphs, proportion that are accurate to ground truth
    \item $S_{100}$: proportion of edges or edge absences shared in 100\% of solution graphs, all of which are accurate
\end{itemize}

\begin{longtable}{|p{0.3cm}p{0.75cm}p{0.85cm}p{0.2cm}|p{1.15cm}|p{1.3cm}p{1.3cm}|p{0.75cm}|p{0.75cm}|p{0.75cm}|p{0.75cm}|p{0.75cm}|}
\caption{Full results of simulations on synthetic graphs.}
\label{tab:full_results}
\endfirsthead
\hline
$\mathcal{N}$ & $p_{degree}$ & $p_{overlap}$ & $s$ & \textbf{Graphs Run} & \textbf{Runtime} & \textbf{Solution Graphs} & $S_{75}$ & $A_{75}$ & $S_{90}$ & $A_{90}$ & $S_{100}$ \\ \hline
6 & 0.1 & 0.25 & 2 & 100 & 0.018 & 24 & 0.534 & 0.912 & 0.362 & 1 & 0.332 \\
6 & 0.1 & 0.25 & 3 & 100 & 0.027 & 480 & 0.331 & 0.888 & 0.119 & 0.975 & 0.055 \\
6 & 0.1 & 0.25 & 4 & 100 & 0.022 & 130 & 0.393 & 0.864 & 0.208 & 0.968 & 0.149 \\
6 & 0.1 & 0.5 & 2 & 100 & 0.017 & 1 & 0.887 & 0.997 & 0.871 & 1 & 0.871 \\
6 & 0.1 & 0.5 & 3 & 100 & 0.017 & 7.5 & 0.709 & 0.947 & 0.575 & 0.992 & 0.561 \\
6 & 0.1 & 0.5 & 4 & 100 & 0.018 & 3 & 0.767 & 0.98 & 0.692 & 0.995 & 0.684 \\
6 & 0.1 & 0.75 & 2 & 100 & 0.018 & 1 & 1 & 1 & 1 & 1 & 1 \\
6 & 0.1 & 0.75 & 3 & 100 & 0.014 & 1 & 0.964 & 0.993 & 0.938 & 1 & 0.938 \\
6 & 0.1 & 0.75 & 4 & 100 & 0.015 & 1 & 0.98 & 0.997 & 0.97 & 1 & 0.97 \\
6 & 0.25 & 0.25 & 2 & 100 & 0.016 & 27.5 & 0.561 & 0.929 & 0.383 & 0.997 & 0.333 \\
6 & 0.25 & 0.25 & 3 & 100 & 0.028 & 494.5 & 0.318 & 0.865 & 0.142 & 0.944 & 0.076 \\
6 & 0.25 & 0.25 & 4 & 100 & 0.031 & 275.5 & 0.371 & 0.879 & 0.181 & 0.966 & 0.119 \\
6 & 0.25 & 0.5 & 2 & 100 & 0.013 & 2 & 0.859 & 0.991 & 0.825 & 1 & 0.821 \\
6 & 0.25 & 0.5 & 3 & 100 & 0.019 & 9 & 0.655 & 0.966 & 0.531 & 0.997 & 0.512 \\
6 & 0.25 & 0.5 & 4 & 100 & 0.015 & 6 & 0.712 & 0.986 & 0.618 & 1 & 0.602 \\
6 & 0.25 & 0.75 & 2 & 100 & 0.016 & 1 & 1 & 1 & 1 & 1 & 1 \\
6 & 0.25 & 0.75 & 3 & 100 & 0.016 & 1 & 0.939 & 0.994 & 0.915 & 1 & 0.913 \\
6 & 0.25 & 0.75 & 4 & 100 & 0.014 & 1 & 0.961 & 0.997 & 0.951 & 1 & 0.951 \\
6 & 0.5 & 0.25 & 2 & 100 & 0.02 & 70.5 & 0.592 & 0.943 & 0.317 & 0.998 & 0.278 \\
6 & 0.5 & 0.25 & 3 & 100 & 0.031 & 1341.5 & 0.32 & 0.894 & 0.099 & 0.995 & 0.059 \\
6 & 0.5 & 0.25 & 4 & 100 & 0.025 & 645 & 0.381 & 0.898 & 0.137 & 0.957 & 0.088 \\
6 & 0.5 & 0.5 & 2 & 100 & 0.017 & 3 & 0.846 & 0.988 & 0.761 & 0.999 & 0.753 \\
6 & 0.5 & 0.5 & 3 & 100 & 0.019 & 22 & 0.644 & 0.944 & 0.446 & 0.989 & 0.413 \\
6 & 0.5 & 0.5 & 4 & 100 & 0.017 & 14.5 & 0.685 & 0.952 & 0.524 & 1 & 0.501 \\
6 & 0.5 & 0.75 & 2 & 100 & 0.015 & 1 & 1 & 1 & 1 & 1 & 1 \\
6 & 0.5 & 0.75 & 3 & 100 & 0.015 & 1 & 0.921 & 0.992 & 0.85 & 1 & 0.846 \\
6 & 0.5 & 0.75 & 4 & 100 & 0.015 & 1 & 0.946 & 0.996 & 0.934 & 1 & 0.934 \\
6 & 0.75 & 0.25 & 2 & 100 & 0.02 & 112.5 & 0.565 & 0.954 & 0.308 & 0.995 & 0.265 \\
6 & 0.75 & 0.25 & 3 & 100 & 0.034 & 1529 & 0.32 & 0.922 & 0.079 & 0.995 & 0.035 \\
6 & 0.75 & 0.25 & 4 & 100 & 0.026 & 622.5 & 0.407 & 0.956 & 0.153 & 0.992 & 0.109 \\
6 & 0.75 & 0.5 & 2 & 100 & 0.018 & 3 & 0.868 & 0.988 & 0.767 & 1 & 0.76 \\
6 & 0.75 & 0.5 & 3 & 100 & 0.02 & 30 & 0.653 & 0.98 & 0.426 & 0.998 & 0.402 \\
6 & 0.75 & 0.5 & 4 & 100 & 0.017 & 14.5 & 0.692 & 0.982 & 0.536 & 0.998 & 0.516 \\
6 & 0.75 & 0.75 & 2 & 100 & 0.017 & 1 & 1 & 1 & 1 & 1 & 1 \\
6 & 0.75 & 0.75 & 3 & 100 & 0.017 & 1 & 0.91 & 0.991 & 0.847 & 1 & 0.842 \\
6 & 0.75 & 0.75 & 4 & 100 & 0.019 & 1 & 0.938 & 0.996 & 0.923 & 1 & 0.922 \\
8 & 0.1 & 0.25 & 2 & 100 & 0.041 & 513 & 0.474 & 0.905 & 0.287 & 0.982 & 0.176 \\
8 & 0.1 & 0.25 & 3 & 100 & 0.038 & 190 & 0.518 & 0.933 & 0.361 & 0.983 & 0.265 \\
8 & 0.1 & 0.25 & 4 & 100 & 0.054 & 932 & 0.422 & 0.904 & 0.245 & 0.993 & 0.161 \\
8 & 0.1 & 0.5 & 2 & 100 & 0.024 & 11.5 & 0.716 & 0.962 & 0.615 & 0.998 & 0.588 \\
8 & 0.1 & 0.5 & 3 & 100 & 0.025 & 6 & 0.768 & 0.967 & 0.683 & 1 & 0.666 \\
8 & 0.1 & 0.5 & 4 & 100 & 0.028 & 13.5 & 0.68 & 0.965 & 0.596 & 0.997 & 0.564 \\
8 & 0.1 & 0.75 & 2 & 100 & 0.021 & 1 & 0.944 & 1 & 0.935 & 1 & 0.934 \\
8 & 0.1 & 0.75 & 3 & 100 & 0.02 & 1 & 0.981 & 1 & 0.981 & 1 & 0.981 \\
8 & 0.1 & 0.75 & 4 & 100 & 0.017 & 1 & 0.988 & 1 & 0.988 & 1 & 0.988 \\
8 & 0.25 & 0.25 & 2 & 100 & 0.071 & 3346 & 0.449 & 0.927 & 0.236 & 0.99 & 0.134 \\
8 & 0.25 & 0.25 & 3 & 100 & 0.053 & 1343 & 0.502 & 0.923 & 0.327 & 0.982 & 0.245 \\
8 & 0.25 & 0.25 & 4 & 100 & 0.123 & 7685 & 0.391 & 0.895 & 0.207 & 0.986 & 0.127 \\
8 & 0.25 & 0.5 & 2 & 100 & 0.039 & 40 & 0.687 & 0.961 & 0.566 & 0.997 & 0.526 \\
8 & 0.25 & 0.5 & 3 & 100 & 0.04 & 12 & 0.744 & 0.978 & 0.648 & 0.999 & 0.612 \\
8 & 0.25 & 0.5 & 4 & 100 & 0.047 & 69 & 0.648 & 0.97 & 0.527 & 0.996 & 0.497 \\
8 & 0.25 & 0.75 & 2 & 100 & 0.016 & 2 & 0.906 & 0.991 & 0.871 & 1 & 0.867 \\
8 & 0.25 & 0.75 & 3 & 100 & 0.042 & 1 & 0.971 & 1 & 0.969 & 1 & 0.969 \\
8 & 0.25 & 0.75 & 4 & 100 & 0.023 & 1 & 0.987 & 0.999 & 0.982 & 1 & 0.982 \\
8 & 0.5 & 0.25 & 2 & 99 & 0.257 & 25648 & 0.439 & 0.933 & 0.195 & 0.992 & 0.115 \\
8 & 0.5 & 0.25 & 3 & 100 & 0.123 & 11016 & 0.465 & 0.942 & 0.265 & 0.988 & 0.198 \\
8 & 0.5 & 0.25 & 4 & 99 & 1.511 & 144744 & 0.354 & 0.917 & 0.155 & 0.989 & 0.103 \\
8 & 0.5 & 0.5 & 2 & 100 & 0.052 & 161 & 0.671 & 0.967 & 0.519 & 0.997 & 0.462 \\
8 & 0.5 & 0.5 & 3 & 100 & 0.039 & 122.5 & 0.685 & 0.965 & 0.526 & 0.999 & 0.493 \\
8 & 0.5 & 0.5 & 4 & 100 & 0.058 & 1847.5 & 0.552 & 0.958 & 0.4 & 0.988 & 0.341 \\
8 & 0.5 & 0.75 & 2 & 100 & 0.018 & 5 & 0.872 & 0.991 & 0.823 & 0.999 & 0.81 \\
8 & 0.5 & 0.75 & 3 & 100 & 0.017 & 1 & 0.939 & 0.998 & 0.929 & 0.999 & 0.926 \\
8 & 0.5 & 0.75 & 4 & 100 & 0.019 & 1 & 0.969 & 1 & 0.966 & 1 & 0.964 \\
8 & 0.75 & 0.25 & 2 & 100 & 0.704 & 80074 & 0.422 & 0.952 & 0.147 & 1 & 0.093 \\
8 & 0.75 & 0.25 & 3 & 99 & 0.361 & 35907 & 0.461 & 0.948 & 0.204 & 0.995 & 0.146 \\
8 & 0.75 & 0.25 & 4 & 100 & 1.268 & 133298 & 0.359 & 0.945 & 0.148 & 0.993 & 0.101 \\
8 & 0.75 & 0.5 & 2 & 100 & 0.031 & 709 & 0.631 & 0.974 & 0.439 & 0.998 & 0.383 \\
8 & 0.75 & 0.5 & 3 & 100 & 0.03 & 392.5 & 0.644 & 0.977 & 0.489 & 0.997 & 0.446 \\
8 & 0.75 & 0.5 & 4 & 100 & 0.07 & 2455 & 0.548 & 0.973 & 0.352 & 0.999 & 0.313 \\
8 & 0.75 & 0.75 & 2 & 100 & 0.021 & 7.5 & 0.866 & 0.99 & 0.802 & 1 & 0.794 \\
8 & 0.75 & 0.75 & 3 & 100 & 0.021 & 3 & 0.917 & 0.996 & 0.892 & 1 & 0.886 \\
8 & 0.75 & 0.75 & 4 & 100 & 0.021 & 1 & 0.953 & 0.997 & 0.939 & 0.998 & 0.936 \\
10 & 0.1 & 0.25 & 2 & 100 & 0.023 & 144 & 0.637 & 0.946 & 0.511 & 0.987 & 0.449 \\
10 & 0.1 & 0.25 & 3 & 100 & 0.231 & 18180 & 0.433 & 0.905 & 0.249 & 0.972 & 0.126 \\
10 & 0.1 & 0.25 & 4 & 99 & 1.595 & 122920 & 0.405 & 0.91 & 0.215 & 0.976 & 0.094 \\
10 & 0.1 & 0.5 & 2 & 100 & 0.017 & 6.5 & 0.816 & 0.986 & 0.77 & 1 & 0.76 \\
10 & 0.1 & 0.5 & 3 & 100 & 0.022 & 9 & 0.773 & 0.979 & 0.715 & 0.999 & 0.702 \\
10 & 0.1 & 0.5 & 4 & 100 & 0.02 & 13 & 0.758 & 0.98 & 0.698 & 0.998 & 0.672 \\
10 & 0.1 & 0.75 & 2 & 100 & 0.017 & 1 & 0.957 & 0.997 & 0.943 & 1 & 0.943 \\
10 & 0.1 & 0.75 & 3 & 100 & 0.018 & 1 & 0.996 & 1 & 0.995 & 1 & 0.995 \\
10 & 0.1 & 0.75 & 4 & 100 & 0.018 & 1 & 0.985 & 1 & 0.983 & 1 & 0.983 \\
10 & 0.25 & 0.25 & 2 & 100 & 0.053 & 2094 & 0.584 & 0.959 & 0.444 & 0.992 & 0.372 \\
10 & 0.25 & 0.5 & 2 & 100 & 0.02 & 37 & 0.781 & 0.979 & 0.694 & 0.998 & 0.663 \\
10 & 0.25 & 0.5 & 3 & 100 & 0.028 & 283 & 0.679 & 0.973 & 0.594 & 0.996 & 0.554 \\
10 & 0.25 & 0.5 & 4 & 100 & 0.024 & 102 & 0.705 & 0.975 & 0.617 & 0.997 & 0.578 \\
10 & 0.25 & 0.75 & 2 & 100 & 0.018 & 3 & 0.919 & 0.996 & 0.907 & 1 & 0.906 \\
10 & 0.25 & 0.75 & 3 & 100 & 0.017 & 1 & 0.99 & 1 & 0.989 & 1 & 0.989 \\
10 & 0.25 & 0.75 & 4 & 100 & 0.018 & 1 & 0.97 & 1 & 0.969 & 1 & 0.969 \\
10 & 0.5 & 0.25 & 2 & 98 & 3.825 & 232272.5 & 0.524 & 0.947 & 0.347 & 0.989 & 0.27 \\
10 & 0.5 & 0.5 & 2 & 100 & 0.047 & 965 & 0.679 & 0.977 & 0.566 & 0.996 & 0.521 \\
10 & 0.5 & 0.5 & 3 & 98 & 0.433 & 34347.5 & 0.579 & 0.956 & 0.437 & 0.994 & 0.385 \\
10 & 0.5 & 0.5 & 4 & 99 & 0.781 & 56920 & 0.565 & 0.955 & 0.434 & 0.995 & 0.37 \\
10 & 0.5 & 0.75 & 2 & 100 & 0.02 & 8 & 0.902 & 0.996 & 0.88 & 1 & 0.876 \\
10 & 0.5 & 0.75 & 3 & 100 & 0.02 & 2 & 0.941 & 0.998 & 0.929 & 1 & 0.928 \\
10 & 0.5 & 0.75 & 4 & 100 & 0.02 & 2 & 0.948 & 0.998 & 0.935 & 1 & 0.934 \\
10 & 0.75 & 0.25 & 2 & 100 & 42.265 & 3248227.5 & 0.496 & 0.947 & 0.294 & 0.989 & 0.224 \\
10 & 0.75 & 0.5 & 2 & 100 & 0.18 & 11239 & 0.654 & 0.974 & 0.542 & 0.993 & 0.479 \\
10 & 0.75 & 0.5 & 3 & 95 & 1.406 & 105252 & 0.581 & 0.963 & 0.433 & 0.995 & 0.375 \\
10 & 0.75 & 0.5 & 4 & 100 & 3.142 & 207902 & 0.546 & 0.959 & 0.406 & 0.994 & 0.357 \\
10 & 0.75 & 0.75 & 2 & 100 & 0.024 & 14.5 & 0.885 & 0.993 & 0.844 & 0.999 & 0.828 \\
10 & 0.75 & 0.75 & 3 & 100 & 0.024 & 4 & 0.921 & 0.999 & 0.904 & 1 & 0.901 \\
10 & 0.75 & 0.75 & 4 & 100 & 0.023 & 6.5 & 0.907 & 0.997 & 0.883 & 0.999 & 0.871 \\
15 & 0.025 & 0.25 & 2 & 99 & 1.211 & 69616 & 0.607 & 0.91 & 0.382 & 0.969 & 0.214 \\
15 & 0.025 & 0.5 & 2 & 100 & 0.031 & 29 & 0.767 & 0.975 & 0.696 & 0.994 & 0.661 \\
15 & 0.025 & 0.5 & 3 & 100 & 0.039 & 96 & 0.726 & 0.971 & 0.644 & 0.996 & 0.606 \\
15 & 0.025 & 0.5 & 4 & 100 & 0.037 & 20 & 0.794 & 0.985 & 0.733 & 0.997 & 0.702 \\
15 & 0.025 & 0.75 & 2 & 100 & 0.022 & 1 & 0.952 & 0.998 & 0.945 & 1 & 0.94 \\
15 & 0.025 & 0.75 & 3 & 100 & 0.022 & 1 & 0.988 & 0.997 & 0.978 & 1 & 0.978 \\
15 & 0.025 & 0.75 & 4 & 100 & 0.023 & 1 & 0.989 & 1 & 0.988 & 1 & 0.988 \\
15 & 0.05 & 0.25 & 2 & 99 & 1.413 & 122586 & 0.574 & 0.901 & 0.385 & 0.968 & 0.221 \\
15 & 0.05 & 0.5 & 2 & 100 & 0.026 & 44 & 0.76 & 0.968 & 0.706 & 0.991 & 0.677 \\
15 & 0.05 & 0.5 & 3 & 100 & 0.037 & 204 & 0.708 & 0.968 & 0.618 & 0.993 & 0.575 \\
15 & 0.05 & 0.5 & 4 & 100 & 0.039 & 38.5 & 0.782 & 0.974 & 0.704 & 0.997 & 0.678 \\
15 & 0.05 & 0.75 & 2 & 100 & 0.02 & 1 & 0.952 & 0.999 & 0.947 & 1 & 0.944 \\
15 & 0.05 & 0.75 & 3 & 100 & 0.022 & 1 & 0.983 & 0.998 & 0.977 & 1 & 0.977 \\
15 & 0.05 & 0.75 & 4 & 100 & 0.025 & 1 & 0.982 & 0.998 & 0.98 & 1 & 0.98 \\
15 & 0.075 & 0.5 & 2 & 99 & 0.033 & 70 & 0.763 & 0.981 & 0.704 & 0.999 & 0.67 \\
15 & 0.075 & 0.5 & 3 & 100 & 0.058 & 316 & 0.697 & 0.971 & 0.623 & 0.993 & 0.557 \\
15 & 0.075 & 0.5 & 4 & 100 & 0.036 & 36 & 0.778 & 0.972 & 0.726 & 0.986 & 0.69 \\
15 & 0.075 & 0.75 & 2 & 100 & 0.022 & 1 & 0.963 & 0.999 & 0.955 & 1 & 0.954 \\
15 & 0.075 & 0.75 & 3 & 100 & 0.025 & 1 & 0.971 & 0.997 & 0.964 & 1 & 0.964 \\
15 & 0.075 & 0.75 & 4 & 100 & 0.026 & 1 & 0.99 & 1 & 0.988 & 1 & 0.988 \\
15 & 0.1 & 0.5 & 2 & 100 & 0.04 & 71.5 & 0.778 & 0.981 & 0.726 & 0.996 & 0.686 \\
15 & 0.1 & 0.5 & 3 & 99 & 0.057 & 1624 & 0.671 & 0.968 & 0.601 & 0.996 & 0.546 \\
15 & 0.1 & 0.5 & 4 & 98 & 0.059 & 162.5 & 0.749 & 0.982 & 0.69 & 0.996 & 0.651 \\
15 & 0.1 & 0.75 & 2 & 100 & 0.026 & 1 & 0.958 & 0.998 & 0.953 & 1 & 0.95 \\
15 & 0.1 & 0.75 & 3 & 100 & 0.028 & 1 & 0.974 & 1 & 0.971 & 1 & 0.971 \\
15 & 0.1 & 0.75 & 4 & 100 & 0.03 & 1 & 0.985 & 1 & 0.984 & 1 & 0.984 \\
15 & 0.25 & 0.75 & 2 & 100 & 0.025 & 5.5 & 0.929 & 0.999 & 0.921 & 1 & 0.919 \\
15 & 0.25 & 0.75 & 3 & 100 & 0.03 & 4 & 0.928 & 0.998 & 0.919 & 1 & 0.917 \\
15 & 0.25 & 0.75 & 4 & 100 & 0.03 & 2 & 0.944 & 0.999 & 0.941 & 1 & 0.94 \\
15 & 0.5 & 0.75 & 2 & 99 & 0.065 & 1090 & 0.855 & 0.993 & 0.829 & 0.999 & 0.819 \\
15 & 0.5 & 0.75 & 3 & 95 & 0.151 & 5328 & 0.827 & 0.994 & 0.799 & 0.999 & 0.789 \\
15 & 0.5 & 0.75 & 4 & 97 & 0.061 & 688 & 0.852 & 0.994 & 0.833 & 0.999 & 0.821 \\
15 & 0.75 & 0.75 & 2 & 98 & 0.348 & 11293.5 & 0.833 & 0.994 & 0.805 & 0.997 & 0.78 \\
25 & 0.025 & 0.5 & 2 & 99 & 0.124 & 579 & 0.769 & 0.974 & 0.705 & 0.995 & 0.646 \\
25 & 0.025 & 0.5 & 3 & 100 & 0.14 & 1166 & 0.75 & 0.978 & 0.682 & 0.993 & 0.621 \\
25 & 0.025 & 0.5 & 4 & 100 & 0.137 & 448 & 0.782 & 0.979 & 0.72 & 0.995 & 0.68 \\
25 & 0.025 & 0.75 & 2 & 100 & 0.062 & 3 & 0.937 & 0.988 & 0.921 & 0.991 & 0.909 \\
25 & 0.025 & 0.75 & 3 & 100 & 0.064 & 2 & 0.952 & 0.999 & 0.944 & 1 & 0.94 \\
25 & 0.025 & 0.75 & 4 & 100 & 0.072 & 1 & 0.987 & 0.999 & 0.985 & 1 & 0.984 \\
25 & 0.05 & 0.5 & 2 & 95 & 0.431 & 7616 & 0.743 & 0.98 & 0.685 & 0.996 & 0.642 \\
25 & 0.05 & 0.5 & 3 & 97 & 0.48 & 17080 & 0.721 & 0.982 & 0.666 & 0.995 & 0.615 \\
25 & 0.05 & 0.5 & 4 & 95 & 0.17 & 1252 & 0.766 & 0.984 & 0.719 & 0.994 & 0.682 \\
25 & 0.05 & 0.75 & 2 & 99 & 0.051 & 6 & 0.925 & 0.998 & 0.915 & 1 & 0.911 \\
25 & 0.05 & 0.75 & 3 & 98 & 0.066 & 1 & 0.961 & 0.996 & 0.957 & 0.998 & 0.951 \\
25 & 0.05 & 0.75 & 4 & 97 & 0.069 & 1 & 0.987 & 1 & 0.986 & 1 & 0.984 \\
25 & 0.075 & 0.75 & 2 & 100 & 0.059 & 5 & 0.933 & 0.999 & 0.928 & 0.999 & 0.922 \\
25 & 0.075 & 0.75 & 3 & 100 & 0.07 & 2 & 0.959 & 1 & 0.955 & 1 & 0.953 \\
25 & 0.075 & 0.75 & 4 & 100 & 0.069 & 1 & 0.982 & 0.999 & 0.979 & 1 & 0.978 \\
25 & 0.1 & 0.75 & 2 & 100 & 0.063 & 10 & 0.921 & 0.998 & 0.917 & 1 & 0.912 \\
25 & 0.1 & 0.75 & 3 & 100 & 0.067 & 4 & 0.943 & 0.997 & 0.937 & 1 & 0.932 \\
25 & 0.1 & 0.75 & 4 & 100 & 0.073 & 2 & 0.974 & 1 & 0.974 & 1 & 0.973 \\ \hline
\end{longtable}

\newpage
\section{European Social Survey Variables} \label{appendix:ESSvars}

\begin{table}[ht]
\centering
\caption{Description of variables included in analysis of European Social Survey in Section \ref{ESS} \citep{ess8, ess9}}
\begin{tabular}{lll}
 &  &  \\
\multicolumn{1}{l|}{\textbf{}} & \textbf{survey} & \textbf{description} \\ \hline
\multicolumn{1}{l|}{\textbf{rlgblg}} & both & Belonging to particular religion or denomination \\
\multicolumn{1}{l|}{\textbf{stflife}} & both & How satisfied with life as a whole \\
\multicolumn{1}{l|}{\textbf{iphlppl}} & both & Important to help people and care for others well-being \\
\multicolumn{1}{l|}{\textbf{ipfrule}} & both & Important to do what is told and follow rules \\
\multicolumn{1}{l|}{\textbf{imptrad}} & both & Important to follow traditions and customs \\
\multicolumn{1}{l|}{\textbf{hinctnta}} & both & Household's total net income, all sources (decile) \\
\multicolumn{1}{l|}{\textbf{impfree}} & both & Important to make own decisions and be free \\
\multicolumn{1}{l|}{\textbf{ipeqopt}} & both & Important that people are treated equally and have equal opportunities \\
\multicolumn{1}{l|}{\textbf{imsclbn}} & ESS8 only & When should immigrants obtain rights to social benefits/services \\
\multicolumn{1}{l|}{\textbf{bnlwinc}} & ESS8 only & Social benefits only for people with lowest incomes \\
\multicolumn{1}{l|}{\textbf{gvslvol}} & ESS8 only & Standard of living for the old, governments' responsibility \\
\multicolumn{1}{l|}{\textbf{sbeqsoc}} & ESS8 only & Social benefits/services lead to a more equal society \\
\multicolumn{1}{l|}{\textbf{wrkprbf}} & ESS8 only & Benefits for parents to combine work and family even if means higher   taxes \\
\multicolumn{1}{l|}{\textbf{lbenent}} & ESS8 only & Many with very low incomes get less benefit than legally entitled to \\
\multicolumn{1}{l|}{\textbf{eusclbf}} & ESS8 only & Against or In favour of European Union-wide social benefit scheme \\
\multicolumn{1}{l|}{\textbf{sblazy}} & ESS8 only & Social benefits/services make people lazy \\
\multicolumn{1}{l|}{\textbf{ppldsrv}} & ESS9 only & By and large, people get what they deserve \\
\multicolumn{1}{l|}{\textbf{gvintcz}} & ESS9 only & Government in country takes into account the interests of all citizens \\
\multicolumn{1}{l|}{\textbf{sofrdst}} & ESS9 only & Society fair when income and wealth is equally distributed \\
\multicolumn{1}{l|}{\textbf{pcmpinj}} & ESS9 only & Convinced that in the long run people compensated for injustices \\
\multicolumn{1}{l|}{\textbf{evfrjob}} & ESS9 only & Everyone in country fair chance get job they seek \\
\multicolumn{1}{l|}{\textbf{topinfr}} & ESS9 only & Top 10\% full-time employees in country, earning more than {[}amount{]}, how   fair \\
\multicolumn{1}{l|}{\textbf{sofrprv}} & ESS9 only & Society fair when people from families with high social status enjoy   privileges \\
\multicolumn{1}{l|}{\textbf{poltran}} & ESS9 only & Decisions in country politics are transparent
\end{tabular}
\end{table}

\end{document}